\definecolor{LightGray}{gray}{0.9}
\begin{document}

\title{Learning Generative Models with Goal-conditioned Reinforcement Learning}

\author{\name Mariana Vargas Vieyra \email mariana.vargas-vieyra@inria.fr \\
       \addr Inria, France
       \AND
       \name Pierre M\'enard \email pierre.menard@ens-lyon.fr \\
       \addr ENS Lyon, France}

\maketitle

\begin{abstract}
  We present a novel, alternative framework for learning generative models with goal-conditioned reinforcement learning. We define two agents, a \textit{goal conditioned agent} (GC-agent) and a \textit{supervised agent} (S-agent). Given a user-input initial state, the GC-agent learns to reconstruct the training set. In this context, elements in the training set are the \textit{goals}. During training, the S-agent learns to imitate the GC-agent while remaining agnostic of the goals. At inference we generate new samples with the S-agent. Following a similar route as in variational auto-encoders, we derive an upper bound on the negative log-likelihood that consists of a reconstruction term and a divergence between the GC-agent policy and the (goal-agnostic) S-agent policy. We empirically demonstrate that our method is able to generate diverse and high quality samples in the task of image synthesis.
\end{abstract}

\begin{keywords}
  Generative Models, Goal-conditioned Reinforcement Learning.
\end{keywords}


\section{Introduction}

We consider the problem of learning a \emph{generative model}.
In recent years the study of generative models became a vast and prolific research field in the machine learning community.
Because of their ability to capture important information about the distribution of the available data they have the capacity to generate new samples, thus emulating the nature of the training set.
Generative models have shown outstanding results in applications such as speech generation \citep{vandenOord2016} and high-quality image generation \citep{brock2019, razavi2019}.
Recent examples of these models include Generative Adversarial Networks (GAN)~\citep{goodfellowgan2014,karrasprogressive2018,brock2019}, Variational Auto-Encoders (VAE)~\citep{rezende14,kingma2014,oord2017neural}, Flow-based Models~\citep{dinh2017,kingmagflow2018}, Auto-regressive Models~\citep{germain2015made,oordpixelcnn2016} and Diffusion Models (DM)~\citep{sohldickstein2015,hodenoising20,nichol21a,songME21}.


Given a sample from the training set, Diffusion Models work by adding a small amount of Gaussian noise to the sample in some pre-defined amount of steps, denoted by $H$.
This \emph{diffusion process} yields a sequence of $H$ intermediate representations, each of which have a larger amount of noise than the previous one.
Note that by following this process, the last representation we obtain is a sample from a Gaussian distribution.
The generative model is then obtained by learning how to reverse this diffusion process.
Once trained, the model is able to generate new samples by applying the reverse diffusion process on gaussian noise.
Recently, \citet{sohldickstein2015,hodenoising20,nichol21a,dhariwal2021diffusion} showed that DMs are competitive with GANs in the task of high-quality image synthesis.
Interestingly, a dual view of DM~\citep{hodenoising20} is the score-based generative model~\citep{song2019}, where the generative model is the result of several steps of a Langevin dynamic~\citep{parisi1980nh} following a score function.
Such score function is learned beforehand in order to approximate the gradient of the log-likelihood.
Although DMs are simple to train, they are slow to sample from~\citep{hodenoising20}
as the model in general requires a large number of steps $H$ ($H \approx 1000$) to be able to reconstruct the inputs.
Several techniques have been proposed to overcome this issue, but they speed-up the sampling process in detriment of the output quality~\citep{nichol21a,songME21}.
This calls for approaches that naturally require less steps to generate samples.

In this paper we adopt a reverse point of view: we learn how to transform an arbitrary fixed initial state into any sample of the training set using the Goal-Conditioned Reinforcement Learning (GCRL) framework \citep{leslie1993,pong2018,andrychowicz2017,schaul15}.
In GCRL the agent aims at a particular state called the \emph{goal}.
At each step, the environment yields a loss that accounts for "how far" the agent is from the goal it is targeting.
For example, the loss can be defined as the Euclidean distance between the current state and the goal.
In  the context of learning a generative model we consider the training set to be a \emph{set of goals}.
Intuitively, our learning procedure consists of training a family of \emph{goal-conditioned agents} (GC-policy) that learn to reach the different elements in the training set by producing a \emph{trajectory} of intermediate representations, departing from the fixed initial state.
At the same time, we obtain the generative model by learning a \emph{mixture} policy of these goal-conditioned policies where the \emph{goal is sampled uniformly at random from the training set}.
Concretely, the generative model approximates a policy that randomly picks a trajectory that departs from the fixed initial state and leads to an element of the training set.
This can be cast as a supervised learning procedure.
Note that the goal-conditioned agents are used for training only.
At inference time, we generate trajectories with the mixture policy and collect the states reached at the final step.
Such states are the new generated samples.
Similar to variational inference~\citep{blei2017}, we derive a lower bound on the log-likelihood that consists of two terms.
The first term measures how good the GC-policy is able to reconstruct the samples.
The second term is a divergence term between the mixture policy and the GC-policy.
We also provide empirical evidence that our method is able to effectively reconstruct the observed data and generate a rich variety of new samples in a small number of steps ($H\approx 16$).

To summarize we highlight our main contributions:
\begin{itemize}[noitemsep]
  \item We bridge the gap between the fields of RL and generative models by introducing a novel framework that learns a generative model using GCRL. Although GCRL has been studied in similar contexts \citep{rudner2021outcome,attias03a}, it is the first time to the best of our knowledge that RL is used as a building block for learning generative models.
  \item We derive a lower bound on the log-likelihood that accounts for the reconstruction quality of the model.
  \item We provide an empirical evaluation that compares our method to the Variational Auto-Encoder model. Comparisons to other generative models are left as future work.
\end{itemize}

\section{Inference as Goal-Conditioned Reinforcement Learning}
\label{sec:inference_as_GCRL}

We assume that we have access to a training set $\{\tx^1,\ldots,\tx^N\}=:\cD\subseteq\cX$ of samples from some unknown distribution. Our goal is to perform inference on this training set where the candidates probability distributions are the final state distribution of a policy in a certain episodic MDP.

\paragraph{Markov Decision Problem (MDP)} We consider a loss-free episodic MDP $\cM= (\cX,\cY,H,p)$ where $\cX$ is the set of states (that also contains the training set $\cD$), $\cY$ the action set, $H$ the number of steps, $p_h(x'|x,y)$ is the transition probability from state~$x$ to state~$x'$ by taking the action $y$ at step $h\in[H]$, where $[H] := \{1,\dots,H\}$.

\paragraph{Policy, reach probability and value function}  A policy $\pi$ is a collection of functions $\pi_h : \cX \to \Delta(\cY)$ for all $h\in [H]$, where every $\pi_h$  maps each state to a probability distribution over actions.

Under policy $\pi$ a trajectory is generated as follows: an initial state $x_1 \sim p_0$ is sampled. Then for $h\in[H]$, given the current sate $x_h$, the agent samples an action $y_h\sim\pi_h(\cdot|x_h)$ and the next state $x_{h+1}\sim p_h(\cdot|x_h,y_h)$ is generated according to the transition probability. We denote by $p_h^\pi(x)$ the probability to reach the state $x$ in the MDP $\cM$ at step $h$ under the policy $\pi$. Similarly, we denote by $p^\pi(\tau)$ the probability distribution of a trajectory $\tau = x_1,a_1,\ldots,x_H,a_H, x_{H+1}$
under the policy $\pi$. Given two policies $\pi$ and $\pi'$ the Kullback-Leibler divergence between $p^\pi$ and $p^{\pi'}$ is, by the chain rule,
\[
\KL(p^\pi, p^{\pi'}) = \E^\pi \left[ \sum_{h=1}^H \KL\!\big(\pi(x_h),\pi'(x_h)\big) \right]\,,
\]
where $\E_\pi$ is the expectation under $p^\pi$.


\paragraph{Upper bound the negative log-likelihood} We assume as probabilistic model for the training set the reach probability of the last step $p_{H+1}^\pi(\cdot)$ \emph{parameterized by the policy} $\pi$. We then want to find the policy that minimizes the negative log-likelihood of the training set $L(\pi)$ where
\[
L(\pi) := \frac{1}{N}\sum_{\tx\in\cD} \log \frac{1}{p_{H+1}^\pi(\tx)}\,.
\]
Solving this optimization problem is difficult because the probability $p_{H+1}^\pi(\tx)$ is typically intractable. Similarly to variational inference~\citep{blei2017}, we will instead minimize \emph{an upper bound} on the negative log-likelihood.
For a fixed element $\tx\in\cD$, conditioned on the state-action pair $(x_H,y_H)$ it holds that
\begin{align}
    \log \frac{1}{p_{H+1}^\pi(\tx)} &= -\log\Big( \E_\pi \big[p_{H}(\tx|x_{H}, y_{H}) \big] \Big)\nonumber\\
     &\leq \!\min_{\pi'} \E_{\pi'}\! \left[\log\frac{1}{p_{H}(\tx|x_{H}, y_{H})}\right]\! +\KL(p^{\pi'},p^{\pi})  \,. \label{eq:ub_negloglikelihood}
\end{align}
Equation~\eqref{eq:ub_negloglikelihood} follows from the variational
formula for the moment generating function (see Lemma 1 in Appendix B) and it defines an upper-bound on the negative log-likelihood of $\tx$.

We can then define a surrogate loss parameterized by a policy $\pi$ and a family of goal-conditioned policies $(\pi^{\tx})_{\tx\in\cD}$, indexed by an element of the training set as:
\begin{align}
  \Lub(\pi, (\pi^{\tx})_{\tx\in\cD} ) &:= \frac{1}{N}\sum_{\tx\in\cD}  \E_{\pi^{\tx}} \left[\log \frac{1}{p_{H}(\tx|x_{H}, y_{H})}\right]+ \KL(p^{\pi^{\tx}},p^{\pi})\label{eq:def_surrogate}\,.
\end{align}
Using the fact that $\Lub$ is an upper-bound on the loss $L$ our problem becomes:
\begin{align*}
\min_{\pi}L(\pi)&\leq \min_{\pi}\min_{(\pi^{\tx})_{\tx\in\cD}} \Lub(\pi,(\pi^x)_{x\in\cX}) \\
&= \min_{\pi} \frac{1}{N}\sum_{\tx\in\cD}  \min_{\pi^{\tx}} \E_{\pi^{\tx}} \left[\log \frac{1}{p_{H}(\tx|x_{H}, y_{H})}\right]+ \KL(p^{\pi^{\tx}},p^{\pi})\,.
\end{align*}
At a high level the minimization of $\Lub$ goes as follows. For each point $\tx\in\cD$ in the training set with goal-conditioned RL we learn a policy $\pi^{\tx}$ not too far from $\pi$ that leads to $\tx$ with high probability. Simultaneously we train the policy $\pi$ to reproduce trajectories from the policies $\pi^{\tx}$.

\paragraph{Inner minimization: goal-conditioned RL} If we fix the policy $\pi$ and a state $\tx\in\cD$ in the training set then solving~\eqref{eq:ub_negloglikelihood} is equivalent to solving a regularized \emph{goal-conditioned RL problem}. That is, we want to find a policy $\pi^{\tx}$ close to $\pi$ that lead to the goal $\tx$ with high probability.
Let
\begin{equation}
  \label{eq:def_loss}
  \ell_h^{\tx}(x,y) := \begin{cases}
  \log\dfrac{1}{p_{H}(\tx|x, y)} &\text{ if }h=H\\
  0 &\text{otherwise}
\end{cases}\,,
\end{equation}
be a loss function parameterized by the goal $\tx$, and let $\cM^{\tx} = (\cX,\cY,H,p,\ell^{\tx})$ be an MDP.
Then the minimization of~\eqref{eq:ub_negloglikelihood} is equivalent to solving the MDP $\cM^{\tx}$ regularized by the policy $\pi$. Precisely, we have
\begin{align*}
\min_{\pi^{\tx}}\E_{\pi^{\tx}} \left[\log\frac{1}{p_{H}(\tx|x_{H}, y_{H})}\right]\! +\!\KL(\pi^{\tx},p^{\pi})
\!= \min_{\pi^{\tx}} V^{\pi^{\tx},\tx}+ \KL(\pi^{\tx},p^{\pi})\,,
\end{align*}
where $V^{\pi^{\tx},\tx}$ is the value of the goal-conditioned policy $\pi^{\tx}$ in the MDP $\cM^{\tx}$. Although this goal-conditioned RL problem has already been studied by \citet{rudner2021outcome} and \citet{attias03a}, it is the first time to the best of our knowledge that this formulation is used as an intermediate task to build a generative model.
The link with variational inference is clear: we seek a policy $\pi'$ that approximates well the posterior distribution of a trajectory generated by $\pi$ conditioned on the fact that this trajectory reaches the goal $\tx$.

\paragraph{Outer minimization: supervised learning} Now, if we fix the family of goal-conditioned policies, minimizing the surrogate loss $\Lub(\cdot,(\pi^{\tx})_{\tx\in\cD})$ over the policy $\pi$ amounts to minimizing a convex combination of Kullback-Leibler divergences
\begin{equation}
  \label{eq:supervised_loss}
\argmin_\pi \Lub(\pi, (\pi^{\tx})_{\tx\in\cD}) = \argmin_\pi  \frac{1}{N}\sum_{\tx\in\cD} \KL(p^{\pi^{\tx}},p^{\pi})
\end{equation}
This optimization problem can be efficiently solved with supervised learning.
First, we sample goals according the the empirical distribution of the training set.
Then, for each goal, we generate a trajectory with the goal-conditioned policy, and collect state-action pairs.
We then use these state-action pairs to supervise the policy $\pi$.

\paragraph{Variational agent} In the sequel we call the pair $(\pi,(\pi^{\tx})_{\tx\in\cD})$ a variational agent (V-agent) made of
$\pi$ a supervised agent (S-agent) and $(\pi^{\tx})_{\tx\in\cD}$ a goal-conditioned agent (GC-agent).


\section{Algorithm}
\label{sec:algorithm}
In this section we describe a particular instanciaton of MDP and V-agent and introduce an algorithm to learn the V-agent.
In what follows we assume the space of states and actions are both the real $d$-space $\cX=\R^d$, $\cY = \R^d$, for some dimension $d$.

\subsection{Instanciation}

We first describe a particular MDP and family of variational agents.

\begin{figure*}[ht]
	\centering
	\includegraphics[width=0.8\textwidth]{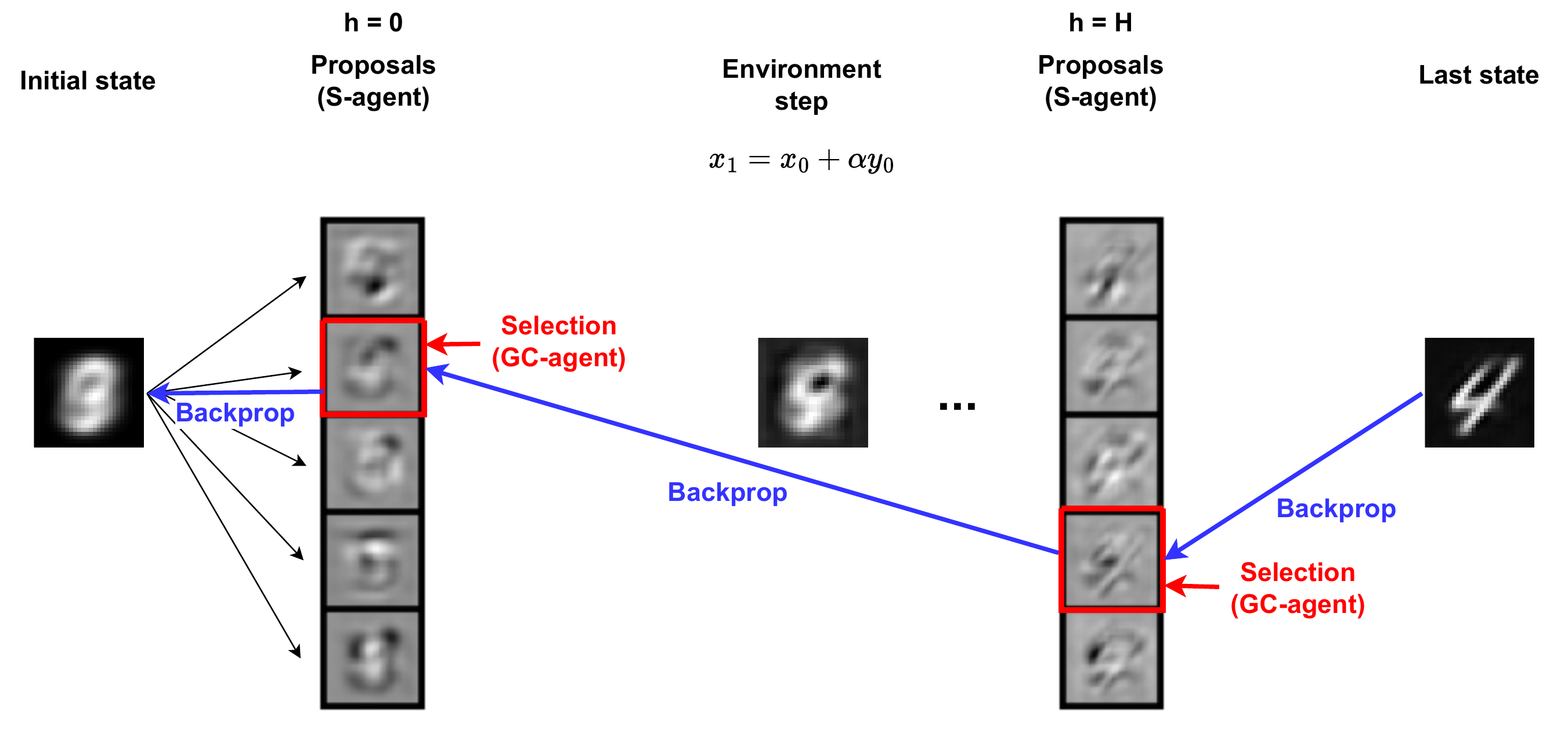}
	\caption{Generation of a trajectory with the GC-agent for a sample of MNIST: At a step $h$, given a state the GC-agent select one of the proposals provided by the S-agent. The action, i.e. the selected proposal, is scaled by some rate $\alpha_h$ and added to the previous state. For a fixed sequence of actions the proposal network is differentiable. Therefore we can backpropagate the loss from the last state to the initial state once the sequence of actions is determined.}
	\label{fig:schema}
\end{figure*}

\paragraph{A particular MDP} We present a specific MPD $\cM = \{\cX,\cY,H,p\}$ parameterized by a sequence of rates $(\alpha_h)_{h\in[H]}$ and a variance $\sigma^2$. The initial state is fixed, e.g. $x_1$ is the mean of the elements in $\cD$. The first $H-1$ transition are deterministic. Precisely for $h\in[H-1]$, given the state $x_h$ and action $y_h$ the next state $x_{h+1}$ is
\begin{equation}
\label{eq:def_MDP}
x_{h+1} = x_h + \alpha_h y_h\,.
\end{equation}
The last transition probability distribution is a Gaussian distribution of variance $\sigma^2$, that is $p_H(\cdot|x_H,y_H) = \cN(x_H+\alpha_H y_H,\sigma^2)$. Note that in this case the goal-conditioned loss is proportional to the mean square error between the last state and the goal $\ell_h^{\tx}(x,y) = \ind\{h=H\}\norm{x+\alpha_h y-\tx}_2^2/(2\sigma^2)$.

\paragraph{A family of variational agents} As mentioned earlier a V-agent is made of a GC-agent and a S-agent. We only consider agents with policy given by a \emph{mixture of $A$ Dirac probability distributions} for some fixed $A\in\N$. That is, given a step and a state (and a goal for the GC-agent) the agent samples an index $a_h \in[A]$ that we call \emph{selection}. The agent also samples a vector of actions $(z_{h,a})_{a\in[A]}\in\cY^A$, that we call \emph{proposals}, from the collection of Dirac distributions. Then, the action is given by the proposal that corresponds to the selected index $y_h=z_{h,a_h}$. Note that, since the Kullback-Leibler divergence between two Dirac distribution with different supports diverges, in order to maintain the loss $\Lub$ defined in~\eqref{eq:def_surrogate} finite \emph{the GC-agent and the S-agent should share the same proposals}. In particular, the proposal function of the GC-agent that maps the step, state and goal to the collection of proposal \emph{is independent of the goal} since it is the case for the proposal function of the GC-agent.
We use GCRL to learn to \emph{select} the best proposal among the $A$ proposals.
In other words, we represent the action to be taken by its index $a_h\in [A]$.
This allows us to operate on a discrete space of actions.
Intuitively, the agent learns to map samples from the underlying distribution of the data to sequences of indexes representing the selected actions at each step.
The proposal function is fit to provide proposals that approach the agent to the goals, see below.
Therefore, the MDP is non-stationary as the proposals change during the learning.
By sharing the proposal function between both agents and representing actions by their indexes, we traded a GCRL task with continuous action space for a (non-stationary) GCRL task with discrete action space.

\begin{remark}
	We use this particular class of policies for the CG and S-agent: a mixture of Dirac distributions that allows us
	to represent a highly multi-modal distribution in few steps (provided that the rates $\alpha_h$ are large enough).
	Note that if one were to use the class of Gaussian probability distributions instead to model the policy, one would
	need much more steps to model a multi-modal distribution.
	Crucially, it would be harder to "branch out" to several different
	specific states from the current state by only adding Gaussian noise, as the model would be prone to mode collapse.
\end{remark}

\paragraph{Parametrization} We parameterize the S-agent by two neural networks: The proposal network $\Spropnet_\theta$ with weights $\theta$  that takes as input a state, step and selection $(x,h,a)$ and outputs an action $y= \Spropnet_\theta(x,h,a)$. The selection network $\Sselectnet_\phi$ with weights $\phi$ that takes as input a state, step $(x,h)$ and outputs a distributions over selections. It is important to note that the S-agent selection network and the proposal network are not conditioned on the goal. As stated above the GC-agent only needs to pick indexes $a_h\in[A]$. It is parameterized by a GC-selection Q-values network $\GCselectQnet_\psi$ with weights $\psi$ that takes as input a state, step and goal $(x,h,\tx)$ and outputs Q-values $\big(\GCselectQnet_{\psi,a}(x,h,\tx)\big)_{a\in[A]}$.
Typically for the $\Spropnet_\theta$ we choose a \UNET architecture~\citep{Ronneberger2015UNetCN} whereas the $\Sselectnet_\phi$ and $\GCselectQnet_\psi$ are simple convolutional neural networks. See Appendix~\ref{app:net_arch} for more details.

\paragraph{Unfolding the proposal network} Note that once a sequence of selections $(a_1,\ldots,a_H)$ is fixed, the last state (or more precisely, the state's mean) of a trajectory generated with actions $y_h = \Spropnet_\theta(x_h,h,a_h)$ from the proposal network and the aforementioned selections is a differential function of the weights $\theta$.
To see this one just needs to unfold \eqref{eq:def_MDP} trough the steps $h\in[H]$, see Figure~\ref{fig:schema}. We denote by $\Spropnet_{\theta,U}(a_1,\ldots,a_H)= x_H+\alpha_H y_H$ the unfolded network that maps a sequence of selections to the mean of the state at step $H+1$ when the actions are obtained with the S-agent proposal network.

\subsection{Training}\label{sec:training}The learning procedure of the V-agent is split into two parts: trajectories are sampled with the GC-agent (and the proposal from the S-agent) to reconstruct a goal sampled at random from the training set. Then these trajectories are used to learn the different networks that parameterized the GC-agent and the S-agent. In particular the GC-agent is learned with the \DQN algorithm\footnote{Precisely, in the experiments we use the \DDQN algorithm \citep{hasselt2016deep} with Dueling Q-Network \citep{wang2016dueling}.} \citep{mnih2013atari}.

\paragraph{Memory} We use three replay buffers, one for each network. The replay buffer $\replayGCselect$ associated with the GC Q-values network $\GCselectQnet$, that will be fed with transitions. The replay buffer $\replaySselect$ associated with the S-agent selection network $\Sselectnet$. And the replay buffer $\replaySprop$ of the S-agent proposal network $\Spropnet$.

\paragraph{Sample trajectory} We first sample a goal $\tx$ uniformly at random from the training set $\cD$. Then a trajectory is generated with the GC-agent as follows. At step $h$, given the current state $x_h$, we pick a selection index $a_h\in\argmin \GCselectQnet_{\psi,a}(x_h,h,\tx)$. The action $y_h = \Spropnet_{\theta}(x_h,h,a_h)$ is the output of S-agent proposal network that corresponds to the index $a_h$. The next state $x_{h+1}$ is given by \eqref{eq:def_MDP} and the GC-agent receives a loss $\ell_h=\ell_h^{\tx}(x_h,y_h)$. The observed transition is stored in the replay-buffer $\replaySprop$ of the GC-agent,
as well as the selection $(x_h,h,a_h)$ in the selection replay-buffer $\replaySselect$ of the S-agent.
Once the horizon is reached we record the selections and the goal $(\{a_1,\ldots,a_h\},\tx)$ in the proposal replay-buffer $\replaySprop$ of the S-agent. See Algorithm~\ref{alg:OurSampleTraj} for a detailed description.

\begin{algorithm}[ht]
\centering
\caption{\OurSampleTraj}
\label{alg:OurSampleTraj}
\begin{algorithmic}[1]
   \State Sample uniformly at random goal $\tx$ from the train set $\cD$.
	 \State Get initial state $x_1$.
	\For{$h \in[H]$} \Comment Generate trajectory with the GC-agent.
	\State Get selection 	$a_h \in \argmin_{a\in[A]} \GCselectQnet_{\psi,a}(x_h,h,\tx)$\Comment We can add an exploration malus.
	\State Get action $y_h = \Spropnet_\theta(x_h,h,a_h)$.
	\State Observe next state $x_{h+1} = x_h +\alpha_h y_h$.
	\State Get loss $\ell_h= \ell_h^{\tx}(x_h,y_h)$.
	\State Record transition $(x_h,h,a_h,x_{h+1},\ell_h,\tx)$ in $\replayGCselect$ and selection $(x_h,h,a_h)$ in $\replaySselect$.
	\EndFor
	\State Record trajectory selections $(\{a_1,\ldots,a_H\},\tx)$ in $\replaySprop$.
\end{algorithmic}
\end{algorithm}

\paragraph{Update} The GC-agent is learned with the \DQN algorithm \citep{mnih2013atari}. Thus we need to define a target Q-values network parameterized by the weights $\psi^{\texttt{target}}$. At a high level we use the target Q-values network $\GCselectQnet_{\psi^{\texttt{target}}}$ and the transitions stored in the replay buffer $\replayGCselect$ to compute new targets via the optimal Bellman equations. Then the GC-agent Q-values network $\GCselectQnet_\psi$ is trained to fit these targets by gradient descent on the mean squared error. Note that the targets weights are updated as an exponential moving average of the weights $\psi^{\texttt{target}}\gets \rho \psi+(1-\rho)\psi^{\texttt{target}}$ for some parameter $\rho \in(0,1)$.
The update of the S-agent is simpler. The S-agent selection network is trained to reproduce the selections picked by the GC-agent by minimizing the cross-entropy loss between the selections from the replay-buffer \replaySselect and its prediction. The S-agent unfolded proposal network $\Spropnet_{\theta,U}$ is trained to map the sequence of selections stored in the replay-buffer $\replaySprop$ to the associated goals. Thus the weights $\theta$ are updated by gradient descent on the mean square error between the output of $\Spropnet_{\theta,U}$ evaluated on the selections and the goal, see Figure~\ref{fig:schema}. A complete description is provided in Algorithm~\ref{alg:OurUpdate}.

\paragraph{Exploration} For the exploration we do not rely on the by default $\epsilon$-greedy mechanism.
As a matter of fact, it is not clear how this exploration technique would interact with the learning of the proposal network which is based on the trajectories generated by the GC-agent. Instead we propose to add a penalty to the Q-values.
This penalty is defined as the logarithm of the probability of the S-agent policy.
That is, for some parameter $\kappa>0$,
\begin{align*}
a_h\in&\argmin_{a\in[A]} \GCselectQnet_{\psi,a}(x_h,h,\tx) + \kappa \log\big(\Sselectnet_{\phi,a}(x_h,h)\big)\,.
\end{align*}
This will encourage the agent to take the selections that have not been used.
Note that this penalty does not interfere with the policy of the GC agent systematically picking the same indexes for a specific goal as long as the other indexes are used for other goals.

\begin{algorithm}[ht]
\centering
\caption{\OurUpdate}
\label{alg:OurUpdate}
\begin{algorithmic}[1]
\If{time to update $\Spropnet_{\theta}$}
\State Sample a batch $\batchSprop$ in $\replaySprop$
\State Update $\Spropnet_{\theta}$ with one step of gradient,
\begin{scriptsize}
	\[
	\nabla_\theta\frac{1}{|\batchSprop|} \sum_{(\{a_1,\ldots, a_H\},\tx)\in\batchSprop} \norm{\Spropnet_{\theta,U}(a_1,\ldots a_H)-\tx}_2^2\,.
	\]
\end{scriptsize}
\EndIf
\If{time to update $\Sselectnet_{\phi}$}
\State Sample a batch $\batchSselect$ in $\replaySselect$
\State Update $\Sselectnet_{\phi}$ with one step of gradient,
\begin{scriptsize}
\[
\nabla_\phi \frac{1}{|\batchSselect|} \sum_{(x,h,a)\in\batchSselect} -\log\big(\Sselectnet_{\phi,a}(x,h)\big)
\,.\]
\end{scriptsize}
\EndIf
\If{time to update $\GCselectQnet_{\psi}$}
\State Sample a batch $\batchGCselect$ in $\replayGCselect$
\For{$(x,h,a,x',\ell,\tx) \in \batchGCselect$}
\State Compute target
\begin{scriptsize}
\[q^{\texttt{target}}(h,x',\ell,\tx) = \ell + \min_b \GCselectQnet_{\psi^{\texttt{target}},b}(x',h+1,\tx)\]
\end{scriptsize}
\EndFor
\State Update $\GCselectQnet_{\psi}$ with one step of gradient,
\begin{scriptsize}
\begin{align*}
\nabla_\psi\frac{1}{|\batchGCselect|} \sum_{(x,h,a,x',\ell,\tx) \in \batchGCselect} \big\rVert&\GCselectQnet_{\psi,a}(x,h,\tx)\\
&\quad-q^{\texttt{target}}(h,x',\ell,\tx) \big\lVert_2^2
\,.\end{align*}
\end{scriptsize}
\If{time to update $\GCselectQnet_{\psi^{\texttt{target}}}$}
\State Update target Q-values weights $ \psi^{\texttt{target}} \gets \rho \psi +(1-\rho)\psi^{\texttt{target}}$.
\EndIf
\EndIf
\end{algorithmic}
\end{algorithm}

\subsection{Inference}
\label{subsec:inference}

To obtain as sample from the learned model one just needs to generate a trajectory using the S-agent.
Note that at inference time we do not use the GC-agent, and that the selections are entirely made by the
S-agent selection network $\Sselectnet_\phi$.
It is important to point out that once the S-agent proposal network is fixed the number of possible samples generated by the model is finite and upper-bounded by $A^H$.
For a reasonably small choice of $A$ and $H$ this upper-bound is very large and not restrictive.
Observe that this model thus produces a discrete representation of the input.
Discrete latent representations have been explored by \citet{van2017neural}.
The only source of randomness in the model comes from sampling a selection from the S-agent selection network $\Sselectnet_\phi$.
A detailed sampling procedure is provided in Algorithm~\ref{alg:OurSample}

\begin{algorithm}[ht]
\centering
\caption{\OurSample}
\label{alg:OurSample}
\begin{algorithmic}[1]
	\State Get initial state $x_1$.
	\For{$h \in[H]$} \Comment Generate trajectory with the S-agent.
	\State Get selection $a_h \sim  \Sselectnet_{\psi}(x_h,h)$.
	\State Get action $y_h = \Spropnet_\theta(x_h,h,a_h)$.
	\State Observe next state $x_{h+1} = x_h +\alpha_h y_h$.
	\EndFor
	\State Return the last state $x_{H+1}$.
\end{algorithmic}
\end{algorithm}


\section{Experiments}
\label{sec:experiments}

We empirically assess the quality of our method on the task of image reconstruction and generation. Note that our goal here is to validate experimentally our new approach rather than providing a new state of the art method. 
To that end we conducted experiments on two publicly available datasets, namely
Fashion-MNIST~\citep{fashionmnist} and MNIST~\citep{deng2012mnist}.
In what follows we describe the experimental setup and present the results.

\subsection{Setup}
We compare our method with two types of generative models, Variational Auto-Encoders and Diffusion Models.
For the VAE baselines we use a convolutional VAE architecture with the state-of-the-art hyperparameters as \citet{Subramanian2020}.
For the DM baselines we use the publicly available implementation by \citet{song2021scorebased}\footnote{\url{https://github.com/yang-song/score_sde}}
to run experiments. In particular, we fix the number of time-steps to $1000$ and the forward diffusion process hyperparameters as in~\citet{hodenoising20}.
For the sake of comparison we parameterize the reverse process with the same architecture as in our method, that is, a four-layers \UNET~\citep{Ronneberger2015UNetCN} where each layer consists of convolution layers followed by group normalization and a ReLU activation function.
To model time-steps we do as follows: At each time-step $t\in[0, \dots, 1000]$ we use an embedding layer to compute an embedding the size of the image. We then reshape and concatenate this embedding as a set of extra channels to the input image.

For training our method we generated goal-conditioned trajectories as described in Section~\ref{sec:algorithm}.
We fixed the horizon $H=16$ and the number of proposed actions per step $A = 16$.
The sequence or rates $(\alpha_h)_{h\in[H]}$ is defined such that $\alpha_h = 1/h$ for $h=1,\dots,H$.
Similar to DMs, we parameterize the agents' proposal network with a four-layers \UNET,
where the dependence on the current step $h$ is also modeled with an embedding layer.
Both the GC-agent and S-agent selection networks are parameterized as Convolutional Neural Networks with three blocks each.
Further details on the architecture of our model and the baselines can be found in Appendix~\ref{app:net_arch}.
The rest of hyperparameters are chosen as follows:
The exploration parameter is defined as $\kappa = 0.05$, and $\rho = 0.06$.
We fixed the initial state to zero. The models were optimized using Adam optimizer~\citep{Kingma2015AdamAM} with a learning rate fixed to $0.0001$.
All models were trained until convergence.

\subsection{Results}

To compare our method with the baselines we collect the following quantities: Mean Squared Error (MSE) loss between the test
set and the test set reconstruction (Rec.), Fr\'echet Inception Distance (FID)~\citep{heusel2017gans} between the test set and its reconstruction (Rec.),
and FID of randomly generated samples with respect to the training set (Samples).
For the VAE, a random sample can be generated by feeding a sample from an isotropic Gaussian distribution (the prior of the VAE latent space) to the decoder.
To sample from the trained DMs we used the sampling method described by \citet{song2021scorebased}.
For our method, random samples are generated as described in Section~\ref{subsec:inference}.

\begin{table}[h!]
  \centering
  \begin{tabular}{lccccc}
    \toprule
    \multirow{2}{*}{Dataset} &
    \multirow{2}{*}{Method} &
      MSE &
      \multicolumn{2}{c}{FID} \\
      \cmidrule(r){4-5}
    &  & Rec. & Rec. & Samples \\
      \midrule
  \multirow{3}{*}{Fashion-MNIST} &  VAE & .13 & 45.09 & 60.06 \\
    & DM & \--- & \--- & \textbf{9.63} \\
    & Us & \textbf{.10} & \textbf{39.10} & 47.74 \\
    \midrule
    \multirow{3}{*}{MNIST} &  VAE & .12 & 16.49 & 27.96 \\
      & DM & \--- & \--- &  \textbf{1.94}\\
      & Us & \textbf{.06} &  \textbf{7.77} & 19.46 \\
    \bottomrule
  \end{tabular}
  \caption{Evaluation of VAE, DM, and our model by MSE between the test set and test sample reconstruction, FID (lower is better) 
  between test set and test sample reconstruction, and FID of randomly generated samples with respect to the training set.}
  \label{tab:results}
\end{table}

\begin{figure}[h!]
  \centering
  \begin{subfigure}[b]{0.45\textwidth}
     \includegraphics[width=1\linewidth]{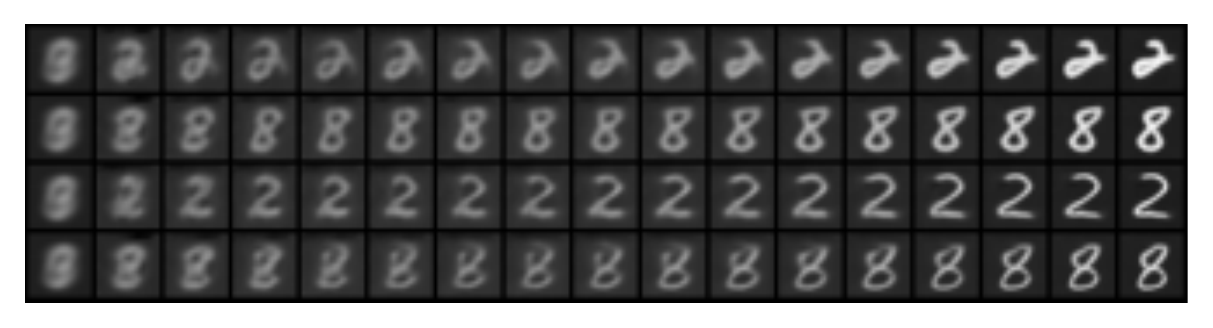}
     \caption{MNIST}
     \label{fig:Ng1}
  \end{subfigure}
  \begin{subfigure}[b]{0.45\textwidth}
     \includegraphics[width=1\linewidth]{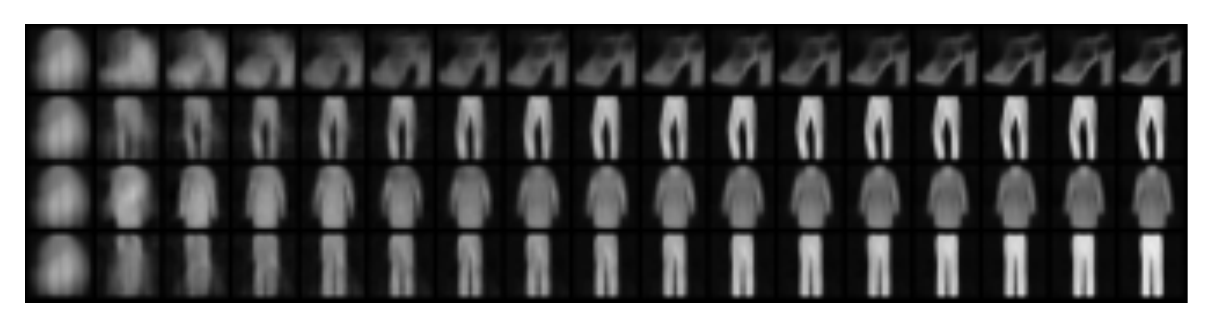}
     \caption{Fashion-MNIST}
     \label{fig:Ng2}
  \end{subfigure}
  \caption[]{Four trajectories generated by the S-agent for MNIST (left) and Fashion-MNIST (right). Note that all trajectories depart from the same initial state. The source or randomness comes from the selection function of the agents.}
  \label{fig:traj}
\end{figure}

\begin{figure*}[h!]
  \centering
  \begin{subfigure}[b]{0.3\textwidth}
     \includegraphics[width=1\linewidth]{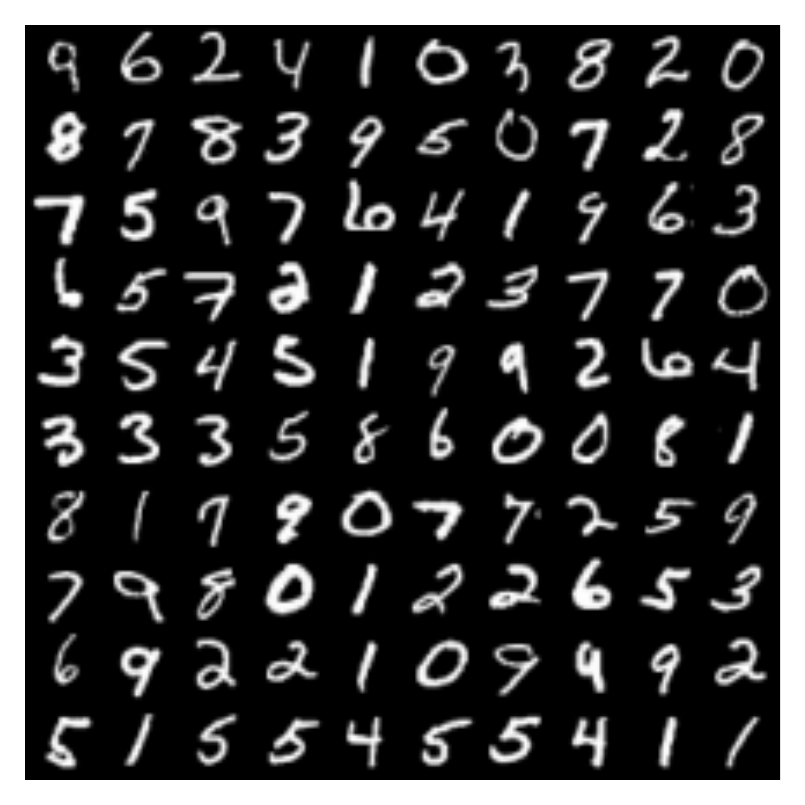}
     \caption{Batch of goals from MNIST test set.}
     \label{fig:Ng1}
  \end{subfigure}
  \begin{subfigure}[b]{0.3\textwidth}
     \includegraphics[width=1\linewidth]{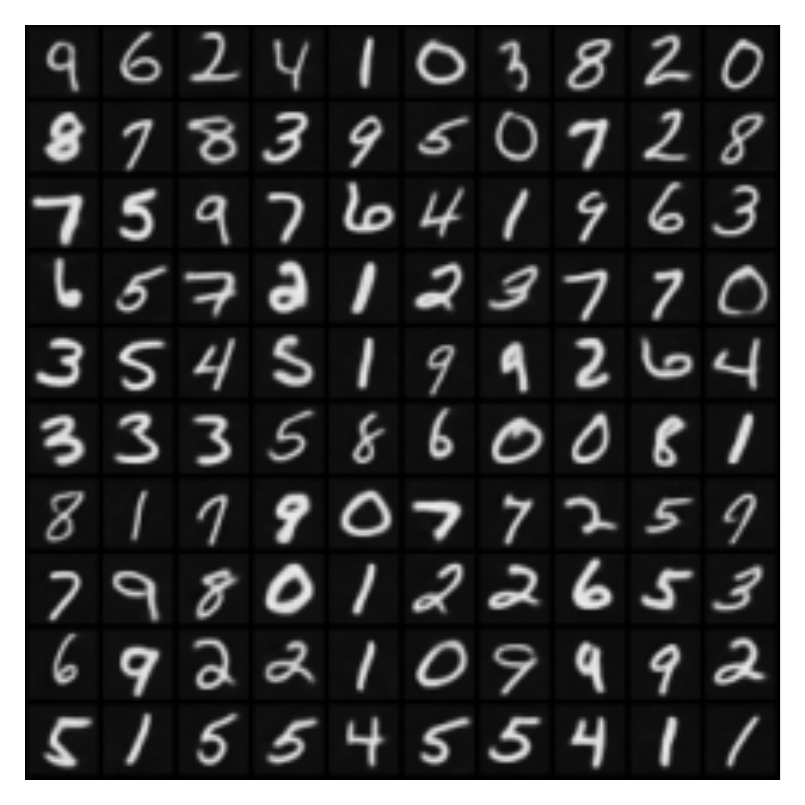}
     \caption{Reconstruction of batch.}
     \label{fig:Ng2}
  \end{subfigure}
  \begin{subfigure}[b]{0.3\textwidth}
     \includegraphics[width=1\linewidth]{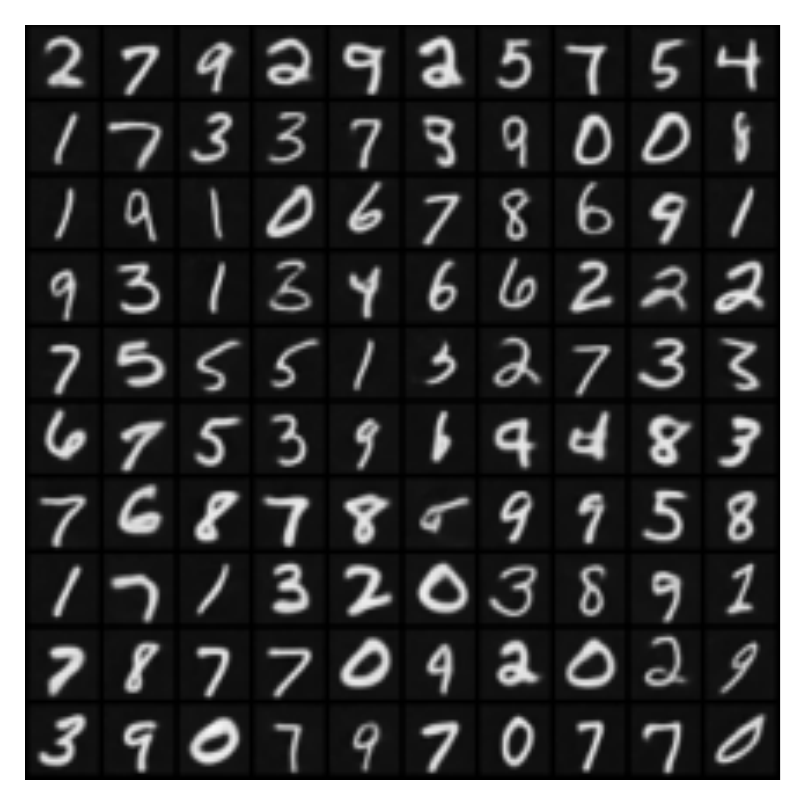}
     \caption{Random samples.}
     \label{fig:Ng2}
  \end{subfigure}
  \caption[]{Reconstruction and generation of images. We sampled a batch of $100$ images from the MNIST test set (left). We then generated trajectories where the selected actions were given by the GC-agent, in order to reconstruct the items in the batch (center). Finally, we randomly generated $100$ images with the S-agent (right). A qualitative analysis of the generated samples (right) shows that our generative model is able to produce a broad variety of images.}
  \label{fig:samples}
  \end{figure*}
  \vspace{-1.5cm}

We report our results in Table~\ref{tab:results}.
We first note that our method outperforms the VAE baseline in terms of MSE and FID by a large margin in both, test sample reconstruction and 
randomly generated images.
Our model lags behind DMs in terms of FID in both settings, reconstruction and randomly generated samples. 
This difference might be explained by the fact that we chose a very small horizon ($H=16$).
Exploring other values for $H$ is left as future work.

Our results confirm that the proposed model is not only able to accurately reconstruct the input but it is also able to randomly generate a rich variety 
of images.In Figure~\ref{fig:traj} we show two randomly generated trajectories, that is, trajectories produced by the S-agent, for the trained models 
in MNIST and Fashion-MNIST datasets.
A qualitative inspection of the reconstructed images and generated samples in Figure~\ref{fig:samples} shows
that our model effectively captures the underlying distribution of the data.

\vspace{-.5cm}
\section{Conclusion and Discussion}
\label{sec:conclusion}

We introduced a novel framework to learn generative models based on goal-conditioned reinforcement learning.
The main idea of this work is to consider the elements of the training set as being generated by an agent that reaches those states after a fixed number of steps.
We then learn a mixture policy that approximates a family of goal-conditioned agents that are trained to generate trajectories that lead to the training points.
Following this line of reasoning we derived an upper-bound for the negative log-likelihood consisting of two terms.
A reconstruction error term that measures the reconstruction quality of the goal-conditioned policy, and a divergence term that encourages the mixture policy to remain close to the family of goal-conditioned agents.
Our experiments demonstrate that our method is able to effectively reconstruct the training set and generate a rich variety of outputs in the task of image synthesis.

There are some aspects of the algorithm we presented that are yet to be explored.
Regarding the use of GCRL to parameterize a generative model an interesting research direction could be to 
explore whether Hindsight Experience Replay \citep{andrychowicz2017} or a variant of it could contribute to 
obtaining a more effective GC-agent in terms of reconstruction loss.
We empirically showed that our method needs significantly less steps to reach the goals than DMs without compromising 
the sample quality (in terms of FID score).
A more thorough comparative study of the required number of steps by DMs and our method would be of interest.
Also, note that we have not use the state-of-the-art architectures for DMs nor for our algorithm due to time and resource 
constraints. 
Studying and comparing DMs and our method with more sophisticated models, and on bigger datasets, is left as future work. 

We believe that the framework we introduced could be useful for a broad scope of tasks. 
In particular, there are applications where the space of actions is inherently discrete.
For instance, in molecule design one could consider adding atoms and bonds to be the space of actions.
In this case one can dispense with a proposal function, and construct a model that only needs to learn to select the right 
actions.
Addressing this task is an interesting research direction.


\acks{This work was granted access to the HPC/AI resources of IDRIS under the allocation 2022-AD011011232R2  made by GENCI. We gratefully acknowledge the support of the Centre Blaise Pascal's IT test platform at ENS de Lyon (Lyon, France) The platform operates the SIDUS solution \citep{quemener2013sidus} developed by Emmanuel Quemener. Pierre M\'enard acknowledges the support of the Chaire SeqALO (ANR-20-CHIA-0020-01).}

\vskip 0.2in
\bibliography{generative_by_rl-bib}


\newpage
\appendix

\section{Derivations of Section~\ref{sec:inference_as_GCRL}}
\label{app:proofs_derivations}

In this appendix we detail the missing derivations of Section~\ref{sec:inference_as_GCRL}.

\begin{lemma}
For any policy $\pi$, state $x$,
\label{lem:ub_neg_loglikelihood}
\[-\log\Big( \E_\pi \big[p_{H}(x|x_{H}, y_{H}) \big] \Big)
 \leq  \min_{\pi' \text{ policy}} \E_{\pi'} \left[\log\frac{1}{p_{H}(x|x_{H}, y_{H})}\right] +\KL(p^{\pi'},p^{\pi})\,.\]
\end{lemma}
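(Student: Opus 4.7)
The statement is an instance of the Donsker--Varadhan variational formula for the log moment generating function, specialized to the trajectory distribution $p^\pi$ and the function $f(\tau) := \log \frac{1}{p_H(x\mid x_H, y_H)}$, which depends on the trajectory only through its last state-action pair. I will prove the inequality for an arbitrary policy $\pi'$ absolutely continuous with respect to $\pi$ and then take the infimum.

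\textbf{Step 1: Rewrite the LHS as an expectation over trajectories.} Since the transition at step $H$ has density $p_H(\cdot\mid x_H,y_H)$, marginalising over $(x_H,y_H)$ under $\pi$ gives
\[
\E_\pi\bigl[p_H(x\mid x_H,y_H)\bigr] \;=\; \E_\pi\bigl[e^{-f}\bigr],
\]
where the expectation is with respect to the full trajectory law $p^\pi$. This is the quantity whose logarithm appears on the LHS.

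\textbf{Step 2: Change of measure.} For any policy $\pi'$ with $p^{\pi'} \ll p^\pi$, I rewrite
\[
\E_\pi\bigl[e^{-f}\bigr] \;=\; \E_{\pi'}\!\left[\frac{dp^\pi}{dp^{\pi'}}\,e^{-f}\right].
\]
Applying Jensen's inequality to the convex function $-\log$ yields
\[
-\log \E_\pi\bigl[e^{-f}\bigr] \;\le\; \E_{\pi'}\!\left[-\log\!\frac{dp^\pi}{dp^{\pi'}} + f\right] \;=\; \E_{\pi'}[f] \,+\, \KL(p^{\pi'},p^\pi).
\]
Substituting back the definition of $f$ gives exactly
\[
-\log\bigl(\E_\pi[p_H(x\mid x_H,y_H)]\bigr) \;\le\; \E_{\pi'}\!\left[\log\frac{1}{p_H(x\mid x_H,y_H)}\right] + \KL(p^{\pi'},p^\pi).
\]

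\textbf{Step 3: Conclude.} Since the inequality holds for every policy $\pi'$ (and is vacuous when $p^{\pi'} \not\ll p^\pi$, since then the KL is $+\infty$), taking the infimum on the right yields the claim. Equality is in fact attained at $dp^{\pi'}/dp^\pi \propto e^{-f}$, although this is not needed for the stated inequality.

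\textbf{Main obstacle.} The only point requiring minor care is that $f$ is measurable with respect to the last-step marginal only while the change of measure acts on the full trajectory; this is a non-issue because $\E_\pi[e^{-f}] = \E_{\pi,\text{traj}}[e^{-f}]$ by the tower property, so the Jensen argument on trajectories returns exactly the desired inequality. No assumption beyond absolute continuity of the candidate $\pi'$ with respect to $\pi$ is needed, and the boundedness of $e^{-f} \le 1$ ensures the expectations are well-defined.
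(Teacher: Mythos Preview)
Your proof is correct and follows essentially the same route as the paper: both rely on the Donsker--Varadhan variational formula for $-\log \E_\pi[e^{-f}]$ with $f(\tau)=\log\frac{1}{p_H(x\mid x_H,y_H)}$. The only cosmetic difference is that you reprove the DV upper bound directly via the change-of-measure plus Jensen argument, whereas the paper invokes the full DV identity $-\log \E_\pi[e^{-f}] = \min_{q\in\Delta(\mathrm{Traj})} \E_q[f]+\KL(q,p^\pi)$ as a black box and then obtains the inequality by restricting the minimization from all trajectory distributions $q$ to the subset $\{p^{\pi'}:\pi'\text{ a policy}\}$.
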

\begin{proof}
Thanks to the Donsker-Varadhan's formula \citep{donsker1983} it holds
\[
-\log\Big( \E_\pi \big[p_{H}(x|x_{H}, y_{H}) \big] \Big) = \min_{q\in \Delta(\mathrm{Traj})} \E_q\left[\log\frac{1}{p_{H}(x|x_{H}, y_{H})} \right] + \KL(q,p^\pi)\,,
\]
where $\Delta(\mathrm{Traj})$ is the set of probability distributions supported on the trajectory $\tau = (x_1,y_1,\ldots,x_H,y_H)$ up to step $H$ with $x_h\in\cX$ and $y_h\in\cY$. Restricting the infimum to probability distributions $q=p^{\pi'}$ induced by some policy $\pi'$ allows us to conclude.
\end{proof}

\newpage

\section{Network architectures}
\label{app:net_arch}
In this appendix we describe the architecture of the proposal, Q-value and selection networks.

\subsection{Proposal network}

The proposal network is parameterized as a four-layer \UNET\footnote{The \UNET implementation is publicly available at https://github.com/milesial/Pytorch-UNet.}.
Each layer has two blocks consisting of a convolution layer followed by group normalization and a ReLU activation function.

The different networks used in our model also depends on discrete time-step $h\in[H]$. We incorporate this information in the model as follows. At each time-step $h$ the proposal network receives an input of the form $(x_{h},h,a_h)$. The current state $x_h$ is of shape $w \times t \times c$, where $w$ is the width,
$t$ is the height, and $c$ is the number of channels
(for example, $c = 3$ for RGB images). The selected index $a_h$ indicates which of the $A$ available proposals is to be
chosen at the current time-step $h$.
We first calculate an embedding $\mathbf{h}_{a,h}$ representing $h$ and $a$ through an Embedding layer to
obtain a tensor the same size as $x_{h}$. We then concatenate $x_{h}$ and $\mathbf{h}_{a,h}$ on the channel axis to obtain a tensor of shape $w\times t\times 2c$ that is fed to the \UNET.
Finally, the \UNET outputs the tensor $x_{h+1}$ of shape $w\times t\times c$.
This process is described as follows
\begin{equation}
  \label{eq:propnet}
  \begin{aligned}
    h'_{a,h} &= Ah + a \\
    \mathbf{h}_{a,h} &= \mathtt{Emb}(h'_{a,h}) \\
    x_{h+1} &= \UNET([x_h, \mathbf{h}_{a,h}])\,,
  \end{aligned}
\end{equation}
where $[\cdot,\cdot]$ represents concatenation of 3D tensors on the channel axis. At each pass the network generates the proposal that corresponds to the specified index.
The output of each pass will then have the same shape as the inputs. We provide a detailed architecture of the network in Figure~\ref{fig:pnet}.

Note that, with this embedding of the time-step and selection the proposal network can memorize at most $H\times A$ distinct images. Thus combining proposals by choosing the selections in a trajectory is essential to be able to model a rich distribution.

\subsection{Selection network}

The selection network architecture is the same for both agents.
It consists of three blocks of convolution layers followed by a group normalization and a ReLU activation function.
The network outputs a tensor of size $\texttt{batch\_size}\times A$ corresponding to the selected actions for each
element in the batch. The embedding of the time-step follows the same method as in the proposal network.
Figure~\ref{fig:snet} presents a sketch of the architecture we used for our experiments.
The batch size was set to $128$, proposal size $A = 16$ and number of steps $H = 16$.

\subsection{Q-value selection network}

The architecture of the Q-value network is the same as the one of the selection network except for the last 
layer where the output is not normalized to obtain a probability distribution.
For the last layer we use a Dueling Network architecture \citep{wang2016dueling}.

\begin{figure}
  \includegraphics[width=\textwidth]{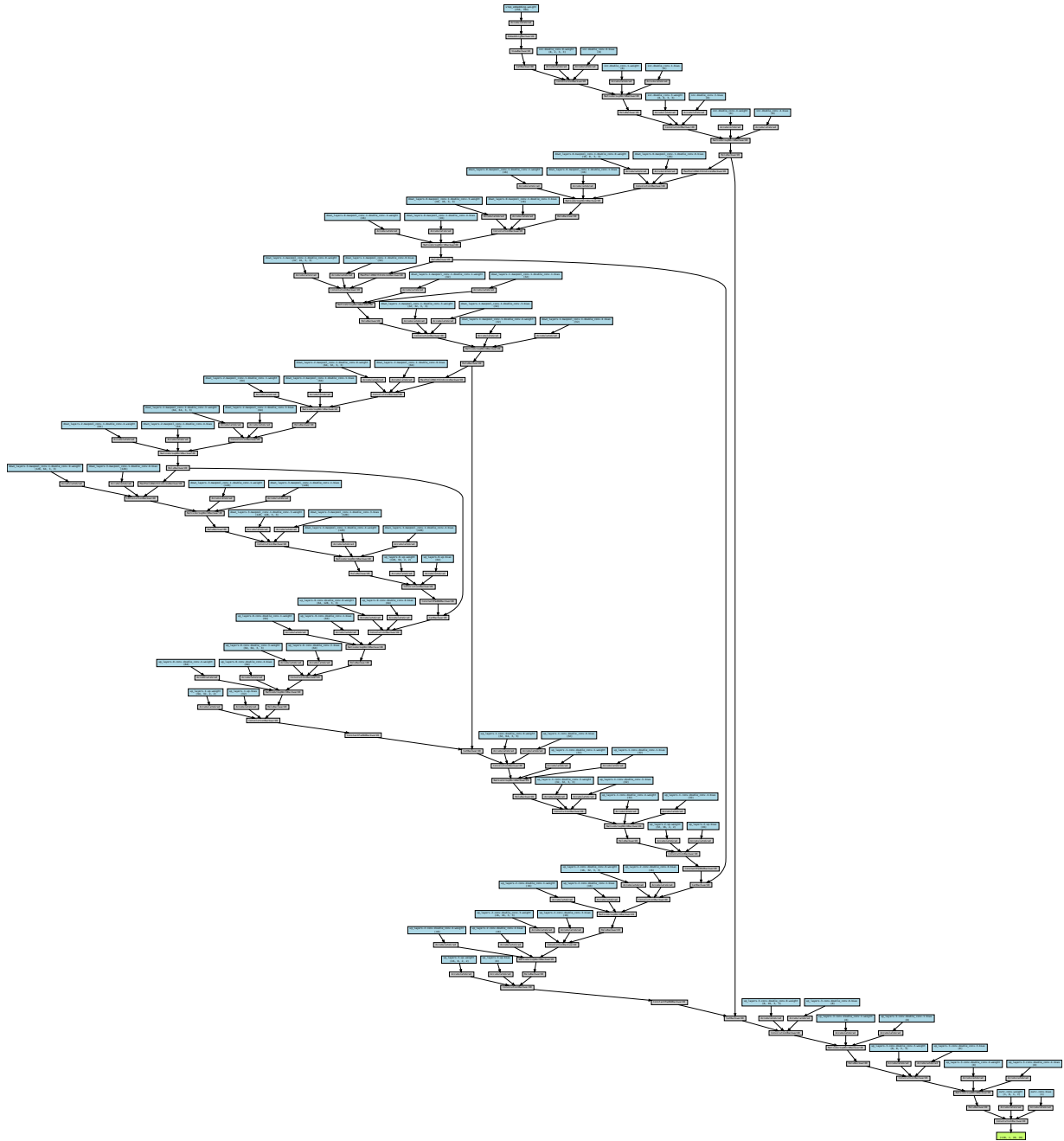}
  \caption{Proposal network architecture (shared between both agents) with batch size $128$ and number of steps $H = 16$.}
  \label{fig:pnet}
\end{figure}

\newpage
\begin{figure}
  \includegraphics[width=\textwidth]{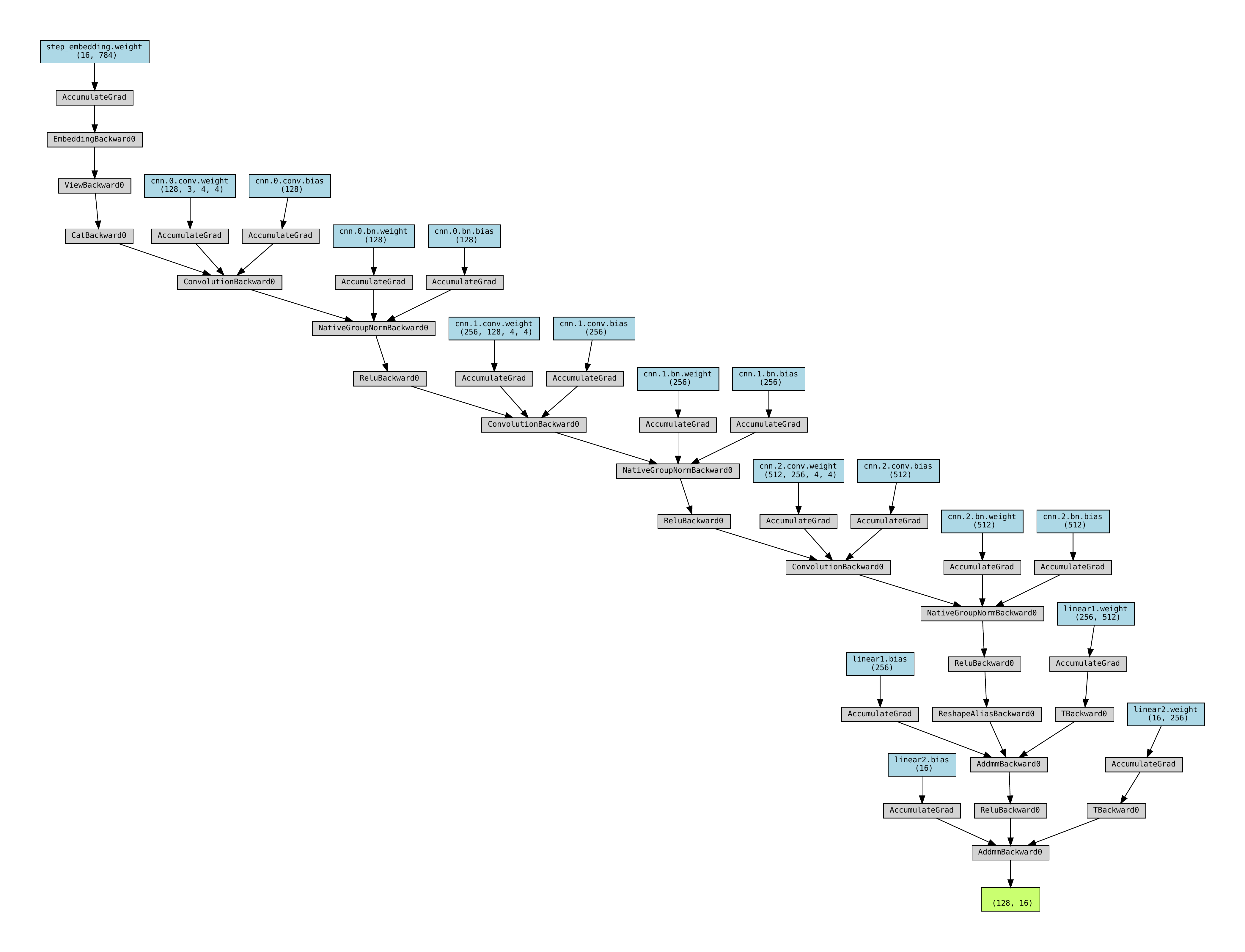}
  \caption{Selection network architecture for the GC-agent and S-agent with batch size $128$ and selection size $A = 16$.}
  \label{fig:snet}
\end{figure}

\end{document}